\relax
\documentclass[letterpaper]{article} 
\usepackage{aaai18}  
\usepackage{times}  
\usepackage{helvet}  
\usepackage{courier}  
\usepackage{url}  
\usepackage{graphicx}  

\usepackage[utf8]{inputenc}

\usepackage[]{todonotes}

\usepackage{amssymb}
\usepackage{amsmath}
\usepackage{amsthm}

\newtheorem{theorem}{Theorem}

\theoremstyle{definition}
\newtheorem{definition}{Definition}

\usepackage[nolist,nohyperlinks]{acronym}
\usepackage[inline]{enumitem}

\usepackage{pgf}
\usepackage{tikz}
\usetikzlibrary{arrows,automata}
\usetikzlibrary{shapes,snakes}
\usetikzlibrary{intersections}

\usepackage{mathtools,algorithm}
\usepackage[noend]{algpseudocode}
\usepackage[capitalise]{cleveref}

\makeatletter

\pgfkeys{/pgf/.cd,
  rectangle corner radius/.initial=3pt
}
\newif\ifpgf@rectanglewrc@donecorner@
\def\pgf@rectanglewithroundedcorners@docorner#1#2#3#4{%
  \edef\pgf@marshal{%
    \noexpand\pgfintersectionofpaths
      {%
        \noexpand\pgfpathmoveto{\noexpand\pgfpoint{\the\pgf@xa}{\the\pgf@ya}}%
        \noexpand\pgfpathlineto{\noexpand\pgfpoint{\the\pgf@x}{\the\pgf@y}}%
      }%
      {%
        \noexpand\pgfpathmoveto{\noexpand\pgfpointadd
          {\noexpand\pgfpoint{\the\pgf@xc}{\the\pgf@yc}}%
          {\noexpand\pgfpoint{#1}{#2}}}%
        \noexpand\pgfpatharc{#3}{#4}{\cornerradius}%
      }%
    }%
  \pgf@process{\pgf@marshal\pgfpointintersectionsolution{1}}%
  \pgf@process{\pgftransforminvert\pgfpointtransformed{}}%
  \pgf@rectanglewrc@donecorner@true
}
\pgfdeclareshape{rectangle with rounded corners}
{
  \inheritsavedanchors[from=rectangle] 
  \inheritanchor[from=rectangle]{north}
  \inheritanchor[from=rectangle]{north west}
  \inheritanchor[from=rectangle]{north east}
  \inheritanchor[from=rectangle]{center}
  \inheritanchor[from=rectangle]{west}
  \inheritanchor[from=rectangle]{east}
  \inheritanchor[from=rectangle]{mid}
  \inheritanchor[from=rectangle]{mid west}
  \inheritanchor[from=rectangle]{mid east}
  \inheritanchor[from=rectangle]{base}
  \inheritanchor[from=rectangle]{base west}
  \inheritanchor[from=rectangle]{base east}
  \inheritanchor[from=rectangle]{south}
  \inheritanchor[from=rectangle]{south west}
  \inheritanchor[from=rectangle]{south east}

  \savedmacro\cornerradius{%
    \edef\cornerradius{\pgfkeysvalueof{/pgf/rectangle corner radius}}%
  }

  \backgroundpath{%
    \northeast\advance\pgf@y-\cornerradius\relax
    \pgfpathmoveto{}%
    \pgfpatharc{0}{90}{\cornerradius}%
    \northeast\pgf@ya=\pgf@y\southwest\advance\pgf@x\cornerradius\relax\pgf@y=\pgf@ya
    \pgfpathlineto{}%
    \pgfpatharc{90}{180}{\cornerradius}%
    \southwest\advance\pgf@y\cornerradius\relax
    \pgfpathlineto{}%
    \pgfpatharc{180}{270}{\cornerradius}%
    \northeast\pgf@xa=\pgf@x\advance\pgf@xa-\cornerradius\southwest\pgf@x=\pgf@xa
    \pgfpathlineto{}%
    \pgfpatharc{270}{360}{\cornerradius}%
    \northeast\advance\pgf@y-\cornerradius\relax
    \pgfpathlineto{}%
  }

  \anchor{before north east}{\northeast\advance\pgf@y-\cornerradius}
  \anchor{after north east}{\northeast\advance\pgf@x-\cornerradius}
  \anchor{before north west}{\southwest\pgf@xa=\pgf@x\advance\pgf@xa\cornerradius
    \northeast\pgf@x=\pgf@xa}
  \anchor{after north west}{\northeast\pgf@ya=\pgf@y\advance\pgf@ya-\cornerradius
    \southwest\pgf@y=\pgf@ya}
  \anchor{before south west}{\southwest\advance\pgf@y\cornerradius}
  \anchor{after south west}{\southwest\advance\pgf@x\cornerradius}
  \anchor{before south east}{\northeast\pgf@xa=\pgf@x\advance\pgf@xa-\cornerradius
    \southwest\pgf@x=\pgf@xa}
  \anchor{after south east}{\southwest\pgf@ya=\pgf@y\advance\pgf@ya\cornerradius
    \northeast\pgf@y=\pgf@ya}

  \anchorborder{%
    \pgf@xb=\pgf@x
    \pgf@yb=\pgf@y%
    \southwest%
    \pgf@xa=\pgf@x
    \pgf@ya=\pgf@y%
    \northeast%
    \advance\pgf@x by-\pgf@xa%
    \advance\pgf@y by-\pgf@ya%
    \pgf@xc=.5\pgf@x
    \pgf@yc=.5\pgf@y%
    \advance\pgf@xa by\pgf@xc
    \advance\pgf@ya by\pgf@yc%
    \edef\pgf@marshal{%
      \noexpand\pgfpointborderrectangle
      {\noexpand\pgfqpoint{\the\pgf@xb}{\the\pgf@yb}}
      {\noexpand\pgfqpoint{\the\pgf@xc}{\the\pgf@yc}}%
    }%
    \pgf@process{\pgf@marshal}%
    \advance\pgf@x by\pgf@xa%
    \advance\pgf@y by\pgf@ya%
    \pgfextract@process\borderpoint{}%
    \pgf@rectanglewrc@donecorner@false
    %
    \southwest\pgf@xc=\pgf@x\pgf@yc=\pgf@y
    \advance\pgf@xc\cornerradius\relax\advance\pgf@yc\cornerradius\relax 
    \borderpoint
    \ifdim\pgf@x<\pgf@xc\relax\ifdim\pgf@y<\pgf@yc\relax
      \pgf@rectanglewithroundedcorners@docorner{-\cornerradius}{0pt}{180}{270}%
    \fi\fi
    %
    \ifpgf@rectanglewrc@donecorner@\else
      \southwest\pgf@yc=\pgf@y\relax\northeast\pgf@xc=\pgf@x\relax
      \advance\pgf@xc-\cornerradius\relax\advance\pgf@yc\cornerradius\relax
      \borderpoint
      \ifdim\pgf@x>\pgf@xc\relax\ifdim\pgf@y<\pgf@yc\relax
       \pgf@rectanglewithroundedcorners@docorner{0pt}{-\cornerradius}{270}{360}%
      \fi\fi
    \fi
    %
    \ifpgf@rectanglewrc@donecorner@\else
      \northeast\pgf@xc=\pgf@x\relax\pgf@yc=\pgf@y\relax
      \advance\pgf@xc-\cornerradius\relax\advance\pgf@yc-\cornerradius\relax
      \borderpoint
      \ifdim\pgf@x>\pgf@xc\relax\ifdim\pgf@y>\pgf@yc\relax
       \pgf@rectanglewithroundedcorners@docorner{\cornerradius}{0pt}{0}{90}%
      \fi\fi
    \fi
    %
    \ifpgf@rectanglewrc@donecorner@\else
      \northeast\pgf@yc=\pgf@y\relax\southwest\pgf@xc=\pgf@x\relax
      \advance\pgf@xc\cornerradius\relax\advance\pgf@yc-\cornerradius\relax
      \borderpoint
      \ifdim\pgf@x<\pgf@xc\relax\ifdim\pgf@y>\pgf@yc\relax
       \pgf@rectanglewithroundedcorners@docorner{0pt}{\cornerradius}{90}{180}%
      \fi\fi
    \fi
  }
}

\makeatother

\tikzstyle{line} = [draw, -latex']
\tikzstyle{hidden} = [ellipse, draw, text centered, inner sep=1pt]
\tikzstyle{obs} = [ellipse, draw, fill=gray!60, text centered, inner sep=1pt]
\tikzstyle{rv} = [draw, ellipse, inner sep=2pt]
\tikzstyle{pf} = [draw, rectangle, fill=gray]
\tikzstyle{pc} = [rectangle with rounded corners,draw,rectangle corner radius=15pt, align=center, minimum width=22mm, font=\normalsize, inner sep=3pt]
\tikzstyle{pc2} = [rectangle with rounded corners,draw,rectangle corner radius=8pt, align=center, minimum width=6mm, font=\normalsize, inner sep=2pt]

\tikzstyle{nhidden} = [draw=none,fill=none,ellipse, text centered, inner sep=1pt]
\tikzstyle{nobs} = [draw=none,fill=none,ellipse, fill=gray!60, text centered, inner sep=1pt]
\tikzstyle{nrv} = [draw=none,fill=none,ellipse, inner sep=2pt]
\tikzstyle{npf} = [draw=none,fill=none,rectangle]

\tikzstyle{ID} = [draw, circle, font=\normalsize]
\tikzstyle{INN} = [draw, circle, inner sep=1pt, fill=black]

\newcommand\pfs[8]{
  \node[pf, #1 of=#2, node distance=#3, xshift=-1mm, yshift=1mm](#6) {};
  \node[pf, #1 of=#2, node distance=#3, label=#4:{#5}](#7) {};
  \node[pf, #1 of=#2, node distance=#3, xshift=1mm, yshift=-1mm](#8) {};
}

\frenchspacing  
\setlength{\pdfpagewidth}{8.5in}  
\setlength{\pdfpageheight}{11in}  
  \pdfinfo{
/Title (Answering Hindsight Queries with Lifted Dynamic Junction Trees)
/Author (Marcel Gehrke, Tanya Braun, and Ralf Möller)}
\setcounter{secnumdepth}{2}  
%
\title{Answering Hindsight Queries with Lifted Dynamic Junction Trees}
\author{Marcel Gehrke, Tanya Braun, and Ralf Möller\\ 
Institute of Information Systems, University of Lübeck, Lübeck  \\
\{gehrke, braun, moeller\}@ifis.uni-luebeck.de}
 \begin{document}
\acrodef{SHR}[SHR]{Standard Health Record}
\acrodef{ehr}[EHR]{electronic health record}
\acrodef{FHIM}[FHIM]{Federal Health Information Model}
\acrodef{OHDSI}[OHDSI]{OMOP Common Data Model, the Observational Health Data Sciences and Informatics}
\acrodef{FHIR}[FHIR]{HL7’s FAST Healthcare Interoperability Resources}

\acrodef{jtree}[jtree]{junction tree}
\acrodef{plms}[PLMs]{probabilistic logical models}

\acrodef{pdb}[PDB]{probabilistic database}

\acrodef{pf}[parfactor]{parametric factor}
\acrodef{lv}[logvar]{logical variable}
\acrodef{rv}[randvar]{random variable}
\acrodef{crv}[CRV]{counting randvar}
\acrodef{prv}[PRV]{parameterised randvar}

\acrodef{fodt}[FO dtree]{first-order decomposition tree}

\acrodef{fojt}[FO jtree]{first-order junction tree}
\acrodef{ljt}[LJT]{lifted junction tree algorithm}
\acrodef{ldjt}[LDJT]{lifted dynamic junction tree algorithm}
\acrodef{lve}[LVE]{lifted variable elimination}

\acrodef{2tpm}[2TPM]{two-slice temporal parameterised model}
\acrodef{2tbn}[2TBN]{two-slice temporal bayesian network}

\acrodef{pm}[PM]{parameterised probabilistic model}
\acrodef{pdecm}[PDecM]{parameterised probabilistic decision model}

\acrodef{pdm}[PDM]{parameterised probabilistic dynamic model}
\acrodef{pddecm}[PDDecM]{parameterised probabilistic dynamic decision model}

\acrodef{dbn}[DBN]{dynamic Bayesian network}
\acrodef{bn}[BN]{Bayesian network}

\acrodef{dfg}[DFG]{dynamic factor graph}

\acrodef{dmln}[DMLN]{dynamic Markov logic network}

\acrodef{rdbn}[RDBN]{relational dynamic Bayesian network}

\acrodef{meu}[MEU]{maximum expected utility}
\acrodef{mldn}[MLDN]{Markov logic decision network}

\maketitle
\begin{abstract}
    The \ac{ldjt} efficiently answers \emph{filtering} and \emph{prediction} queries for probabilistic relational temporal models by building and then reusing a first-order cluster representation of a knowledge base for multiple queries and time steps.
    We extend \ac{ldjt} to
    \begin{enumerate*}
        \item solve the \emph{smoothing} inference problem to answer hindsight queries by introducing an efficient backward pass and
        \item discuss different options to instantiate a first-order cluster representation during a backward pass. 
    \end{enumerate*}   
    Further, our relational forward backward algorithm makes hindsight queries to the very beginning feasible.
    \ac{ldjt} answers multiple temporal queries faster than the static \acl{ljt} on an unrolled model, which performs \emph{smoothing} during message passing.   
        
\end{abstract}
\acresetall	

%

\section{Introduction}\label{sec:intro}

Areas like healthcare or logistics and cross-sectional aspects such as IT security involve probabilistic data with relational and temporal aspects and need efficient exact inference algorithms, as indicated by~\citeauthor{vlasselaer2014efficient} \shortcite{vlasselaer2014efficient}.
These areas involve many objects in relation to each other with changes over time and uncertainties about object existence, attribute value assignments, or relations between objects.  
More specifically, IT security involves network dependencies (relational) for many components (objects), streams of attacks over time (temporal), and uncertainties due to, for example, missing or incomplete information, caused by faulty sensor data.
By performing model counting, \acp{pdb} can answer queries for relational temporal models with uncertainties~\cite{dignos2012temporal,dylla2013temporal}.
However, each query embeds a process behaviour, resulting in huge queries with possibly redundant information.
In contrast to \acp{pdb}, we build more expressive and compact models including behaviour (offline) enabling efficient answering of more compact queries (online).
For query answering, our approach performs deductive reasoning by computing marginal distributions at discrete time steps.
In this paper, we study the problem of exact inference, in form of \emph{smoothing}, in large temporal probabilistic models. 

We introduce \acp{pdm} to represent probabilistic relational temporal behaviour and propose the \ac{ldjt} to exactly answer multiple \emph{filtering} and \emph{prediction} queries for multiple time steps efficiently \cite{gehrke2018ldjt}.
\ac{ldjt} combines the advantages of the interface algorithm~\cite{Murphy:2002:DBN} and the \ac{ljt}~\cite{BrMoe16a}.
Specifically, this paper extends \ac{ldjt} and contributes
\begin{enumerate*}
    \item an \emph{inter} \ac{fojt} backward pass to perform \emph{smoothing} for hindsight queries,
    \item different \ac{fojt} instantiation options during a backward pass, and
    \item a relational forward backward algorithm.
\end{enumerate*}
Our relational forward backward algorithm reinstantiates \acp{fojt} by leveraging \ac{ldjt}'s forward pass.
Without reinstantiating \acp{fojt}, the memory consumption of keeping all \acp{fojt} instantiated renders hindsight queries to the very beginning infeasible. 

Even though \emph{smoothing} is a main inference problem, to the best of our knowledge there is no approach solving \emph{smoothing} efficiently for relational temporal models.
Smoothing can improve the accuracy of hindsight queries by back-propagating newly gained evidence. 
Additionally, a backward pass is required for problems such as learning.

Lifting exploits symmetries in models to reduce the number of instances to perform inference on.
\ac{ljt} reuses the \ac{fojt} structure to answer multiple queries.
\ac{ldjt} also reuses the \ac{fojt} structure to answer queries for all time steps $t > 0$.
Additionally, \ac{ldjt} ensures a minimal exact \emph{inter} \ac{fojt} information propagation. 
Thus, \ac{ldjt} propagates minimal information to connect \acp{fojt} by message passing also during backward passes and reuses \ac{fojt} structures to perform \emph{smoothing}.

In the following, we begin by introducing \acp{pdm} as a representation for relational temporal probabilistic models and present \ac{ldjt}, an efficient reasoning algorithm for \acp{pdm}.
Afterwards, we extend \ac{ldjt} with an \emph{inter} \acp{fojt} backward pass and discuss different options to instantiate an \ac{fojt} during a backward pass. 
Lastly, we evaluate \ac{ldjt} against \ac{ljt} and 
 conclude by looking at extensions.

\section{Related Work}\label{sec:rel}

We take a look at inference for propositional temporal models, relational static models, and give an overview about research regarding relational temporal models.

For exact inference on propositional temporal models, a naive approach is to unroll the temporal model for a given number of time steps and use any exact inference algorithm for static, i.e., non-temporal, models.
In the worst case, once the number of time steps changes, one has to unroll the model and infer again.
\citeauthor{Murphy:2002:DBN} \shortcite{Murphy:2002:DBN} proposes the interface algorithm consisting of a forward and backward pass that uses a temporal d-separation with a minimal set of nodes to apply static inference algorithms to the dynamic model.

First-order probabilistic inference leverages the relational aspect of a static model.
For models with known domain size, it exploits symmetries in a model by combining instances to reason with representatives, known as lifting \cite{poole2003first}. 
\citeauthor{poole2003first} \shortcite{poole2003first} introduces parametric factor graphs as relational models and proposes \ac{lve} as an exact inference algorithm on relational models.
Further, \citeauthor{Braz07} \shortcite{Braz07}, \citeauthor{milch2008lifted} \shortcite{milch2008lifted}, and \citeauthor{TagFiDaBl13} \shortcite{TagFiDaBl13} extend \ac{lve} to its current form.
\citeauthor{lauritzen1988local} \shortcite{lauritzen1988local}  introduce the junction tree algorithm.
To benefit from the ideas of the junction tree algorithm and \ac{lve}, \citeauthor{BrMoe16a} \shortcite{BrMoe16a} present \ac{ljt} that efficiently performs exact first-order probabilistic inference on relational models given a set of queries.

Inference on relational temporal models mostly consists of approximative approaches.
Additionally, to being approximative, these approaches involve unnecessary groundings or are only designed to handle single queries efficiently.
\citeauthor{ahmadi2013exploiting} \shortcite{ahmadi2013exploiting} propose lifted (loopy) belief propagation.
From a factor graph, they build a compressed factor graph and apply lifted belief propagation with the idea of the factored frontier algorithm \cite{murphy2001factored}, which is an approximate counterpart to the interface algorithm and also provides means for a backward pass. 
\citeauthor{thon2011stochastic} \shortcite{thon2011stochastic} introduce CPT-L, a probabilistic model for sequences of relational state descriptions with a partially lifted inference algorithm.
\citeauthor{geier2011approximate} \shortcite{geier2011approximate} present an online interface algorithm for \acp{dmln}, similar to the work of  \citeauthor{papai2012slice} \shortcite{papai2012slice}.
Both approaches slice \acp{dmln} to run well-studied static MLN \cite{richardson2006markov} inference algorithms on each slice.  
Two ways of performing online inference using particle \emph{filtering} are described in \cite{manfredotti2009modeling,nitti2013particle}.

\citeauthor{vlasselaer2016tp} \shortcite{vlasselaer2016tp} introduce an exact approach for relational temporal models involving computing probabilities of each possible interface assignment. 

To the best of our knowledge, none of the relational temporal approaches perform \emph{smoothing} efficiently.
Besides of \ac{ldjt}'s benefits of being an exact algorithm answering multiple filter and \emph{prediction} queries for relation temporal models efficiently, we decided to extend \ac{ldjt} as it offers the ability to reinstatiate previous time steps and thereby make hindsight queries to the very beginning feasible. 

\section{Parameterised Probabilistic Models}\label{sec:back}
Based on \cite{BraMo17}, we shortly present \acp{pm} for relational static models.
Afterwards, we extend \acp{pm} to the temporal case, resulting in \acp{pdm} for relational temporal models, which, in turn, are based on \cite{gehrke2018ldjt}. 

\subsection{Parameterised Probabilistic Models}\label{pm}

\ac{pm}s combine first-order logic with probabilistic models, representing first-order constructs using \acp{lv} as parameters.
As an example, we set up a \ac{pm} for risk analysis with an attack graph (AG).
An AG models attacks on targeted components in a network.
A binary \ac{rv} holds if a component is compromised, which provides an attacker with privileges to further compromise a network to reach a final target.
We use \acp{lv} to represent users with certain privileges.
The model is inspired by \citeauthor{munoz2017exact} \shortcite{munoz2017exact}, who examine exact probabilistic inference for IT security with AGs.

\begin{definition}
    Let $\mathbf{L}$ be a set of \ac{lv} names, $\Phi$ a set of factor names, and $\mathbf{R}$ a set of \ac{rv} names. 
    A \ac{prv} $A = P(X^1,...,X^n)$ represents a set of \acp{rv} behaving identically by combining a \ac{rv} $P \in \mathbf{R}$ with $X^1,...,X^n \in \mathbf{L}$.
    If $n = 0$, the \ac{prv} is parameterless.
    The domain of a \ac{lv} $L$ is denoted by $\mathcal{D}(L)$.
    The term $range(A)$ provides possible values of a \ac{prv} $A$.
    Constraint $(\mathbf{X},C_\mathbf{X})$ allows to restrict \acp{lv} to certain domain values and is a tuple with a sequence of \acp{lv} $\mathbf{X} = (X^1,...,X^n)$ and a set $C_\mathbf{X} \subseteq \times_{i=1}^n \mathcal{D}(X^i)$.
    The symbol $\top$ denotes that no restrictions apply and may be omitted.
    The term $lv(Y)$ refers to the \acp{lv} 
    in some element $Y$.
    The term $gr(Y)$ denotes the set of instances of $Y$ with all \acp{lv} in $Y$ grounded w.r.t.\ constraints.
\end{definition}

From $\mathbf{R} = \{Server, User\}$ and $\mathbf{L} = \{X, Y\}$ with $\mathcal{D}(X) = \{x_1, x_2, x_3\}$ and $\mathcal{D}(Y) = \{y_1, y_2\}$, we build the boolean PRVs $Server$ and $User(X)$.
With $C = (X, \{x_1, x_2\})$, $gr(User(X)|C) = \{User(x_1), \linebreak User(x_2)\}$.
$gr(User(X)|\top)$ also contains $User(x_3)$.

\begin{figure}[b]
\centering
\scalebox{1}{
\begin{tikzpicture}[rv/.style={draw, ellipse, inner sep=1pt},pf/.style={draw, rectangle, fill=gray},label distance=0.2mm]
	\node[obs]					 								(S)	{$Server$};
    \pfs{above}{S}{7mm}{0}{$g^3$}{USa}{US}{USb}    
	\node[rv, left of=US, node distance=15mm, inner sep=0.5pt]			(U)	{$User(X)$};    
    \pfs{left}{U}{15mm}{330}{$g^0$}{U1a}{U1}{U1b}    
	\node[rv, left of=U1, node distance=15mm]						(T1)	{$Attack1$};    
    \pfs{below}{S}{7mm}{0}{$g^4$}{ASa}{AS}{ASb}    
	\node[rv, left of=AS, node distance=15mm, inner sep=0.5pt]			(A)	{$Admin(Y)$};    
    \pfs{left}{A}{15mm}{30}{$g^1$}{A1a}{A1}{A1b}    
	\node[rv, left of=A1, node distance=15mm]						(T2)	{$Attack2$};    
    \pfs{below left}{U}{10mm}{0}{$g^2$}{UAa}{UA}{UAb}
	\node[rv, left of=UA, node distance=20mm, inner sep=0.5pt]			(FW)	{$Infects(X,Y)$};
    
	\draw (U) -- (US);
	\draw (US) -- (S);
    
	\draw (U) -- (U1);
	\draw (U1) -- (T1);
    
	\draw (A) -- (AS);
	\draw (AS) -- (S);
    
	\draw (A) -- (A1);
	\draw (A1) -- (T2);
    
	\draw (U) -- (UA);
	\draw (UA) -- (A);
	\draw (UA) -- (FW);
\end{tikzpicture}
}
\caption{Parfactor graph for $G^{ex}$ }
\label{fig:swe}
\end{figure}
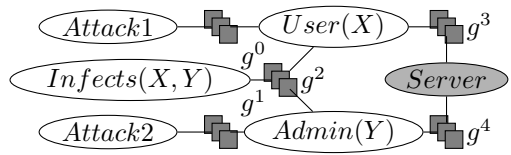

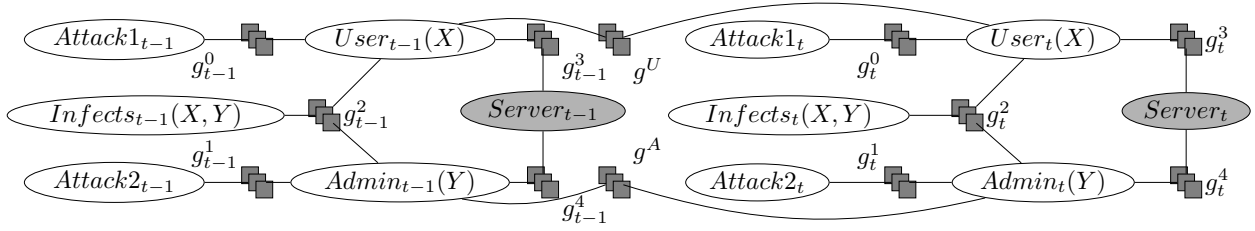
\begin{figure*}[t]
\center
\scalebox{0.95}{
\begin{tikzpicture}[rv/.style={draw, ellipse, inner sep=1pt},pf/.style={draw, rectangle, fill=gray},label distance=0.2mm]
	\node[obs]					 								(S)	{$Server_{t-1}$};
    \pfs{above}{S}{10mm}{335}{$g^3_{t-1}$}{USa}{US}{USb}    
	\node[rv, left of=US, node distance=20mm, inner sep=0.5pt]			(U)	{$User_{t-1}(X)$};    
    \pfs{left}{U}{20mm}{200}{$g^0_{t-1}$}{U1a}{U1}{U1b}    
	\node[rv, left of=U1, node distance=20mm]						(T1)	{$Attack1_{t-1}$};    
    \pfs{below}{S}{10mm}{335}{$g^4_{t-1}$}{ASa}{AS}{ASb}    
	\node[rv, left of=AS, node distance=20mm, inner sep=0.5pt]			(A)	{$Admin_{t-1}(Y)$};    
    \pfs{left}{A}{20mm}{160}{$g^1_{t-1}$}{A1a}{A1}{A1b}    
	\node[rv, left of=A1, node distance=20mm]						(T2)	{$Attack2_{t-1}$};    
    \pfs{below left}{U}{15mm}{0}{$g^2_{t-1}$}{UAa}{UA}{UAb}
	\node[rv, left of=UA, node distance=25mm, inner sep=0.5pt]			(FW)	{$Infects_{t-1}(X,Y)$};
    
	\node[obs, right of=S, node distance = 9cm]					 		(S2)	{$Server_t$};
    \pfs{above}{S2}{10mm}{0}{$g^3_t$}{USa}{US2}{USb}    
	\node[rv, left of=US2, node distance=20mm, inner sep=0.5pt]			(U2)	{$User_t(X)$};    
    \pfs{left}{U2}{20mm}{200}{$g^0_t$}{U1a}{U12}{U1b}    
	\node[rv, left of=U12, node distance=20mm]						(T12)	{$Attack1_t$};    
    \pfs{below}{S2}{10mm}{0}{$g^4_t$}{ASa}{AS2}{ASb}    
	\node[rv, left of=AS2, node distance=20mm, inner sep=0.5pt]			(A2)	{$Admin_t(Y)$};    
    \pfs{left}{A2}{20mm}{160}{$g^1_t$}{A1a}{A12}{A1b}    
	\node[rv, left of=A12, node distance=20mm]						(T22)	{$Attack2_t$};    
    \pfs{below left}{U2}{15mm}{0}{$g^2_t$}{UAa}{UA2}{UAb}
	\node[rv, left of=UA2, node distance=25mm, inner sep=0.5pt]			(FW2)	{$Infects_t(X,Y)$};
    
    \node[right of = S, node distance = 1cm] (I) {};
    
    \pfs{below}{I}{10mm}{45}{$g^A$}{UAa}{IA}{UAb}
    \pfs{above}{I}{10mm}{315}{$g^U$}{UAa}{IU}{UAb}

    \path [-, bend left=15] (IA) edge node {} (A);
    \path [-, bend right=15] (IA) edge node {} (A2);   
    
    \path [-, bend right=15] (IU) edge node {} (U);
    \path [-, bend left=15] (IU) edge node {} (U2);
    
	\draw (U) -- (US);
	\draw (US) -- (S);   
	\draw (U) -- (U1);
	\draw (U1) -- (T1);  
	\draw (A) -- (AS);
	\draw (AS) -- (S); 
	\draw (A) -- (A1);
	\draw (A1) -- (T2);
	\draw (U) -- (UA);
	\draw (UA) -- (A);
	\draw (UA) -- (FW);

	\draw (U2) -- (US2);
	\draw (US2) -- (S2);  
	\draw (U2) -- (U12);
	\draw (U12) -- (T12);
	\draw (A2) -- (AS2);
	\draw (AS2) -- (S2);
	\draw (A2) -- (A12);
	\draw (A12) -- (T22);
	\draw (U2) -- (UA2);
	\draw (UA2) -- (A2);
	\draw (UA2) -- (FW2);


\end{tikzpicture}
}
\caption{$G_\rightarrow^{ex}$ the two-slice temporal parfactor graph for model $G^{ex}$}
\label{fig:TSPG}	
\end{figure*}

\begin{definition}
We denote a \ac{pf} $g$ with
$\forall \mathbf{X} : \phi(\mathcal{A})\;| C$,
$\mathbf{X} \subseteq \mathbf{L}$ being a set of \acp{lv} over which the factor generalises, $C$ a constraint on $\mathbf{X}$, and $\mathcal{A} = (A^1,...,A^n)$ a sequence of \acp{prv}.
We omit $(\forall \mathbf{X} :)$ if $\mathbf{X} = lv(\mathcal{A})$.
A function $\phi : \times_{i=1}^n range(A^i) \mapsto \mathbb{R}^+$ with name $\phi \in \Phi$ is identical for all grounded instances of $\mathcal{A}$.
The complete specification for $\phi$ is a list of all input-output values.
A \ac{pm} $G := \{g^i\}_{i=0}^{n-1}$ is a set of \acp{pf} and semantically represents the full joint probability distribution $P_G = \frac{1}{Z} \prod_{f \in gr(G)} \phi(f)$ with $Z$ as normalisation constant.
\end{definition}

Adding boolean PRVs $Attack1$, $Attack2$, $Admin(Y)$, $Infects(X,Y)$, $G_{ex} = \{g^i\}^4_{i=0}$,
 $g^0 = \phi^0(Attack1, User(X))$, 
 $g^1 = \phi^1(Attack2, Admin(Y))$, 
 $g^2 = \phi^2(User(X), Admin(Y), Infects(X,Y))$, 
 $g^3 = \phi^3(Server, User(X))$, and
 $g^4 = \phi^4(Server, Admin(Y))$
forms a model.
$g^2$ has eight, the others four input-output pairs (omitted). 
Constraints are $\top$, i.e., the $\phi$'s hold for all domain values.
E.g., $gr(g^0)$ contains three factors with identical $\phi$.
\Cref{fig:swe} depicts $G^{ex}$ as a graph with six variable nodes for the PRVs and five factor nodes for $g^0$ to $g^4$ with edges to the PRVs involved.
Additionally, we can observe the state of the server.
The remaining \acp{prv} are latent.

The semantics of a model is given by grounding and building a full joint distribution.
In general, queries ask for a probability distribution of a \ac{rv} using a model's full joint distribution and given fixed events as evidence. 

\begin{definition}
    Given a \ac{pm} $G$, a ground \ac{prv} $Q$ and grounded \ac{prv}s with fixed range values $\mathbf{E}$, the expression $P(Q|\mathbf{E})$ denotes a query w.r.t.\ $P_G$.
\end{definition}


\subsection{Parameterised Probabilistic Dynamic Models}\label{sec:pdm}

To define \acp{pdm}, we use \acp{pm} and the idea of how \acp{bn} give rise to \acp{dbn}. 
We define \acp{pdm} based on the first-order Markov assumption, i.e., a time slice $t$ only depends on the previous time slice $t-1$. 
Further, the underlining process is stationary, i.e., the model behaviour does not change over time. 

\begin{definition}
    A \ac{pdm} is a pair of \acp{pm} $(G_0,G_\rightarrow)$ where
        $G_0$ is a PM representing the first time step and  
        $G_\rightarrow$ is a \acl{2tpm} representing $\mathbf{A}_{t-1}$ and $\mathbf{A}_t$ where
    $\mathbf{A}_\pi$ a set of \acp{prv} from time slice $\pi$.
\end{definition}

\Cref{fig:TSPG} shows 
$G_\rightarrow^{ex}$ consisting of $G^{ex}$ for time step $t-1$ and $t$ with \emph{inter}-slice \acp{pf} for the behaviour over time. 
In this example, the \acp{pf} $g^{A}$ and $g^U$ are the \emph{inter}-slice \acp{pf}, modelling the temporal behavior. 

\begin{definition}
    Given a \ac{pdm} $G$, a ground \ac{prv} $Q_t$ and grounded \ac{prv}s with fixed range values $\mathbf{E}_{0:t}$ the expression $P(Q_t|\mathbf{E}_{0:t})$ denotes a query w.r.t.\ $P_G$.
\end{definition}

The problem of answering a marginal distribution query $P(A^i_\pi|\mathbf{E}_{0:t})$ w.r.t.\ the model is called \emph{prediction} for $\pi > t$, \emph{filtering} for $\pi = t$, and \emph{smoothing} for $\pi < t$.
\section{Lifted Dynamic Junction Tree Algorithm}\label{sec:fodjt}
We start by introducing \ac{ljt}, mainly based on \cite{braun2017preventing}, to provide means to answer queries for \acp{pm}.
Afterwards, we present \ac{ldjt}, based on \cite{gehrke2018ldjt}, consisting of \ac{fojt} constructions for a \ac{pdm} and an efficient \emph{filtering} and \emph{prediction} algorithm. 

\subsection{Lifted Junction Tree Algorithm}\label{ljt}

\ac{ljt} provides efficient means to answer queries $P(\mathbf{Q}|\mathbf{E})$, with a set of query terms, given a \ac{pm} $G$ and evidence $\mathbf{E}$, by performing the following steps:
\begin{enumerate*}
    \item Construct an \ac{fojt} $J$ for $G$.
    \item Enter $\mathbf{E}$ in $J$.
    \item Pass messages.
    \item Compute answer for each query $Q^i \in \mathbf{Q}$.
\end{enumerate*}

We first define an \ac{fojt} and then go through each step. 
To define an \ac{fojt}, we first need to define parameterised clusters (parclusters), the nodes of an \ac{fojt}.

\begin{definition}
    A parcluster $\mathbf{C}$ is defined by $\forall \mathbf{L} : \mathbf{A} | C$.
    $\mathbf{L}$ is a set of \ac{lv}s, $\mathbf{A}$ is a set of \ac{prv}s with $lv(\mathbf{A}) \subseteq \mathbf{L}$, and $C$ a constraint on $\mathbf{L}$.
    We omit $(\forall \mathbf{L} :)$ if $\mathbf{L} = lv(\mathbf{A})$.
    A parcluster $\mathbf{C}^i$ can have parfactors $\phi(\mathcal{A}^\phi) | C^\phi $ assigned given that
    \begin{enumerate*}
        \item $\mathcal{A}^\phi \subseteq \mathbf{A}$,
        \item $lv(\mathcal{A}^\phi) \subseteq \mathbf{L}$, and
        \item $C^\phi \subseteq C$
    \end{enumerate*}
    holds.
    We call the set of assigned \ac{pf}s a local model $G^i$.\\
An \ac{fojt} for a \ac{pm} $G$ is $J=(\mathbf{V},\mathbf{E})$ where $J$ is a cycle-free graph,
the nodes $\mathbf{V}$ denote a set of parcluster, and the set $\mathbf{E}$ edges between parclusters. 
An \ac{fojt} must satisfy the properties:
\begin{enumerate*}
    \item A parcluster $\mathbf{C}^i$ is a set of \acp{prv} from $G$.
    \item For each \ac{pf} $\phi(\mathcal{A}) | C$  in G, $\mathcal{A}$ must appear in some parcluster $\mathbf{C}^i$.
    \item If a \ac{prv} from $G$ appears in two parclusters $\mathbf{C}^i$ and $\mathbf{C}^j$, it must also appear in every parcluster $\mathbf{C}^k$ on the path connecting nodes i and j in $J$.
\end{enumerate*}
The separator $\mathbf{S}^{ij}$ containing shared \acp{prv} of edge $i-j$ is given by $\mathbf{C}^i \cap \mathbf{C}^j$. 
\end{definition}

\ac{ljt} constructs an \ac{fojt} using a \acl{fodt}, enters evidence in the \ac{fojt}, and passes messages through an \emph{inbound} and an \emph{outbound} pass, to distribute local information of the nodes through the \ac{fojt}.
To compute a message, \ac{ljt} eliminates all non-seperator \acp{prv} from the parcluster's local model and received messages.
After message passing, \ac{ljt} answers queries.
For each query, LJT finds a parcluster containing the query term and sums out all non-query terms in its local model and received messages.

\begin{figure}[b]
    \centering
    \scalebox{0.7}{
\begin{tikzpicture}[pc/.style={draw, rounded corners=6pt, align=center, node distance=46mm, inner sep=2pt}]
	\node[pc, label={[gray, inner sep=1pt]270:{\scriptsize$\{g^0\}$}},label={[font=\scriptsize]90:{$\mathbf{C}^1$}}]				(c1) {$Attack1,$\\$ User(X)$};
	\node[pc, right of=c1, label={[gray, inner sep=1pt]270:{\scriptsize$\{g^2, g^3, g^4\}$}},label={90:{\scriptsize$\mathbf{C}^2$}}]	(c2) {$Server, User(X),$\\$ Admin(Y), Infects(X,Y)$};
	\node[pc, right of=c2, label={[gray, inner sep=1pt]270:{\scriptsize$\{g^1\}$}},label={90:{\scriptsize$\mathbf{C}^3$}}]			(c3) {$Attack2,$\\$ Admin(Y)$};
	\draw (c1) -- node[inner sep=1pt, pin={[yshift=-2.4mm]90:{\scriptsize$\{User(X)\}$}}]		{} (c2);
	\draw (c2) -- node[inner sep=1pt, pin={[yshift=-2.4mm]90:{\scriptsize$\{Admin(Y)\}$}}]	{} (c3);
\end{tikzpicture}
}
\caption{FO jtree for $G^{ex}$ (local models in grey)}
\label{fig:fojt}
\end{figure}
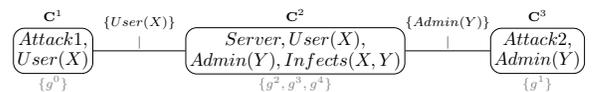

\Cref{fig:fojt} shows an \ac{fojt} of $G^{ex}$ with the local models of the parclusters and the separators as labels of edges.
During the \emph{inbound} phase of message passing, \ac{ljt} sends messages from $\mathbf{C}^1$ and $\mathbf{C}^3$ to $\mathbf{C}^2$ and during the \emph{outbound} phase from $\mathbf{C}^2$ to $\mathbf{C}^1$ and $\mathbf{C}^3$.
If we want to know whether $Attack1$ holds, we query for $P(Attack1)$ for which \ac{ljt} can use parcluster $\mathbf{C}^1$.
\ac{ljt} sums out $User(X)$ from $\mathbf{C}^1$'s local model $G^{1}$, $\{g^0\}$, combined with the received messages, here, one message from $\mathbf{C}^2$. 

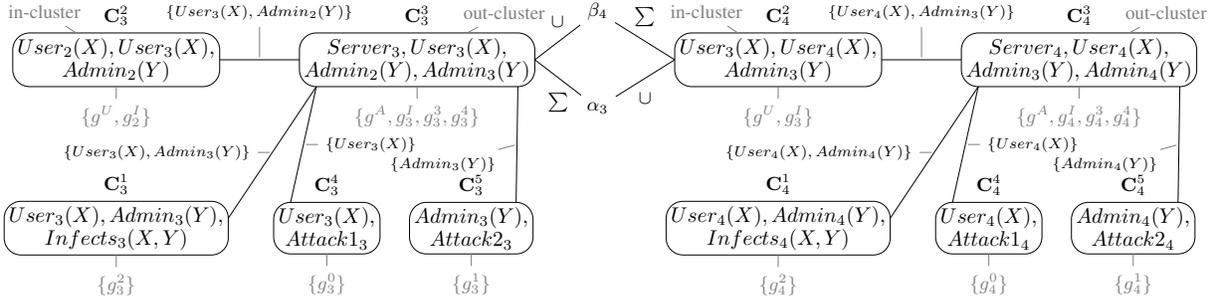
\begin{figure*}[t]
\center
\scalebox{0.8}{
\begin{tikzpicture}[every node/.style={font=\footnotesize}, node distance=40mm]
    \node[pc2, pin={[pin distance=2mm, gray, align=center, text width=3cm]270:
    {$\{g^U, g^I_{2}\}$}},
    pin={[pin distance=1.5mm, gray, align=center]130:{in-cluster}},
    label={90:{$\mathbf{C}_3^2$}}](c2) 
    {$ User_{2}(X),  User_3(X),$\\$ Admin_{2}(Y)$};
    \node[pc2, node distance=28mm, below of = c2, pin={[pin distance=2mm, gray]270:
    {$\{g^2_3 \}$}},
    label={90:{$\mathbf{C}_3^1$}}](c1) 
    {$User_3(X), Admin_3(Y),$\\$ Infects_3(X,Y)$};
    \node[pc2, node distance=50mm, right of = c2, pin={[pin distance=2mm, gray]270:
    {$\{g^{A}, g^I_3, g^3_3,g^4_3 \}$}},
    pin={[pin distance=1.5mm, gray, align=center]40:{out-cluster}},
    label={90:{$\mathbf{C}_3^3$}}](c3) 
    {$Server_3, User_3(X),$\\$Admin_{2}(Y), Admin_3(Y)$};
    \node[pc2, node distance=35mm, right of = c1, pin={[pin distance=2mm, gray]270:
    {$\{g^0_3 \}$}},
    label={90:{$\mathbf{C}_3^4$}}](c4) 
    {$User_3(X),$\\$ Attack1_3$};
    \node[pc2, node distance=24mm, right of = c4, pin={[pin distance=2mm, gray]270:
    {$\{g^1_3 \}$}},
    label={90:{$\mathbf{C}_3^5$}}](c5) 
    {$Admin_3(Y),$\\$ Attack2_3$};
    
    \node[right of = c3, node distance=3cm] (c) {};
    \node[below of = c, node distance=0.8cm] (c11) {$\alpha_3$};
    \node[above of = c, node distance=0.8cm] (b) {$\beta_4$};

    \node[pc2, right of = c, node distance=3cm, pin={[pin distance=2mm, gray, align=center, text width=3cm]270:
    {$\{g^U, g^I_{3}\}$}},
    pin={[pin distance=1.5mm, gray, align=center]130:{in-cluster}},
    label={90:{$\mathbf{C}_4^2$}}](c7) 
    {$ User_{3}(X),  User_4(X),$\\$ Admin_{3}(Y)$};
    \node[pc2, node distance=28mm, below of = c7, pin={[pin distance=2mm, gray]270:
    {$\{g^2_4 \}$}},
    label={90:{$\mathbf{C}_4^1$}}](c6) 
    {$User_4(X), Admin_4(Y),$\\$ Infects_4(X,Y)$};
    \node[pc2, node distance=50mm, right of = c7, pin={[pin distance=2mm, gray]270:
    {$\{g^{A}, g^I_4, g^3_4,g^4_4 \}$}},
    pin={[pin distance=1.5mm, gray, align=center]40:{out-cluster}},
    label={90:{$\mathbf{C}_4^3$}}](c8) 
    {$Server_4, User_4(X),$\\$Admin_{3}(Y), Admin_4(Y)$};
    \node[pc2, node distance=35mm, right of = c6, pin={[pin distance=2mm, gray]270:
    {$\{g^0_4 \}$}},
    label={90:{$\mathbf{C}_4^4$}}](c9) 
    {$User_4(X),$\\$ Attack1_4$};
    \node[pc2, node distance=24mm, right of = c9, pin={[pin distance=2mm, gray]270:
    {$\{g^1_4 \}$}},
    label={90:{$\mathbf{C}_4^5$}}](c10) 
    {$Admin_4(Y),$\\$ Attack2_4$};
    
\draw (c1.before north east) -- node[inner sep=1pt, pin={[pin distance=2mm, align=center]180:\scriptsize{$\{User_3(X), Admin_3(Y)\}$}}] {} (c3.after south west);
\draw (c3) -- node[inner sep=1pt, pin={[pin distance=5mm, align=center]90:\scriptsize{$\{User_3(X), Admin_{2}(Y)\}$}}] {} (c2);
\draw (c3.after south west) -- node[inner sep=1pt, pin={[pin distance=2mm, align=center]0:\scriptsize{$\{ User_3(X)\}$}}] {} (c4.before north west);
\draw (c3.before south east) -- node[ inner sep=1pt, pin={[pin distance=2mm, align=center]200:\scriptsize{$\{ Admin_3(Y)\}$}}] {} (c5.after north east);

\draw (c6.before north east) -- node[inner sep=1pt, pin={[pin distance=2mm, align=center]180:\scriptsize{$\{User_4(X), Admin_4(Y)\}$}}] {} (c8.after south west);
\draw (c8) -- node[inner sep=1pt, pin={[pin distance=5mm, align=center]90:\scriptsize{$\{User_4(X), Admin_{3}(Y)\}$}}] {} (c7);
\draw (c8.after south west) -- node[inner sep=1pt, pin={[pin distance=2mm, align=center]0:\scriptsize{$\{ User_4(X)\}$}}] {} (c9.before north west);
\draw (c8.before south east) -- node[ inner sep=1pt, pin={[pin distance=2mm, align=center]200:\scriptsize{$\{ Admin_4(Y)\}$}}] {} (c10.after north east);

    \path [-] (c11) edge node [label= below:{$\sum$}] {} (c3.east);
    \path [-] (c11) edge node [label= below:{$\cup$}] {} (c7.west);
    \path [-] (b) edge node [label= above:{$\cup$}] {} (c3.east);
    \path [-] (b) edge node [label= above:{$\sum$}] {} (c7.west);

\end{tikzpicture}
}
\caption{Forward and backward pass of \ac{ldjt} (local models and in- and out-cluster labeling in grey)}
\label{fig:fojt1}	
\end{figure*}

\subsection{LDJT: Overview} 

\ac{ldjt} efficiently answers queries $P(\mathbf{Q}_t|\mathbf{E}_{0:t})$, with a set of query terms $\{\mathbf{Q}_t\}_{t=0}^T$, given a \ac{pdm} $G$ and evidence $\{\mathbf{E}_t\}_{t=0}^T$, by performing the following steps:

\begin{enumerate*}[label=(\roman*)]
    \item Offline construction of two \acp{fojt} $J_0$ and $J_t$ with \emph{in-} and \emph{out-clusters} from $G$
    \item For $t=0$, using $J_0$ to enter $\mathbf{E}_0$, pass messages, answer each query term $Q_\pi^i \in \mathbf{Q}_0$, and preserve the state in message $\alpha_0$
    \item For $t>0$, instantiate $J_t$ for the current time step $t$, recover the previous state from message $\alpha_{t-1}$, enter $\mathbf{E}_t$ in $J_t$, pass messages, answer each query term $Q_\pi^i \in \mathbf{Q}_t$, and preserve the state in message $\alpha_t$
\end{enumerate*}

We begin with \ac{ldjt}'s \acp{fojt} construction, which contain a minimal set of \acp{prv} to m-separate the \acp{fojt}.
M-separation means that information about these \acp{prv} make \acp{fojt} independent from each other.
Afterwards, we present how \ac{ldjt} connects \acp{fojt} for reasoning to solve the \emph{filtering} and \emph{prediction} problems efficiently.

\subsection{LDJT: FO Jtree Construction for PDMs}\label{ldjt:const} 

\ac{ldjt} constructs \acp{fojt} for $G_0$ and $G_\rightarrow$, both with an incoming and outgoing interface.
To be able to construct the interfaces in the \acp{fojt}, \ac{ldjt} uses the \ac{pdm} $G$ to identify the interface \acp{prv} $\mathbf{I}_t$ for a time slice $t$.
\begin{definition}
    The forward interface is defined as $\mathbf{I}_{t} = \{A_{t}^i \mid \exists \phi(\mathcal{A}) | C \in G :  A_{t}^i \in \mathcal{A} \wedge \exists A_{t+1}^j \in \mathcal{A}\}$, i.e., the \acp{prv} which have successors in the next slice. 
\end{definition}

\ac{prv}s $User_{t-1}(X)$ and $Admin_{t-1}(Y)$ from $G_{\rightarrow}^{ex}$, shown in \cref{fig:TSPG}, have successors in the next time slice, making up $\mathbf{I}_{t-1}$.
To ensure interface \acp{prv} $\mathbf{I}$ ending up in a single parcluster, \ac{ldjt} adds a \ac{pf} $g^I$ over the interface to the model.
Thus, \ac{ldjt} adds a \ac{pf} $g^I_0$ over $\mathbf{I}_0$ to $G_0$, builds an \ac{fojt} $J_0$, and labels the parcluster with $g^I_0$ from $J_0$ as \emph{in-} and \emph{out-cluster}.
For $G_\rightarrow$, \ac{ldjt} removes all non-interface \acp{prv} from time slice $t-1$, adds \acp{pf} $g^I_{t-1}$ and $g^I_{t}$, constructs $J_t$, and labels 
the parcluster containing $g^I_{t-1}$ as \emph{in-cluster} and the parcluster containing $g^I_{t}$ as \emph{out-cluster}.

The interface \acp{prv} are a minimal required set to m-separate the \acp{fojt}.
\ac{ldjt} uses these \acp{prv} as separator to connect the \emph{out-cluster} of $J_{t-1}$ with the \emph{in-cluster} of $J_t$, allowing to reuse the structure of $J_t$ for all $t>0$.

\subsection{LDJT: Reasoning with PDMs}\label{sec:forward}

Since $J_0$ and $J_t$ are static, \ac{ldjt} uses \ac{ljt} as a subroutine by passing on a constructed \ac{fojt}, queries, and evidence for step $t$ to handle evidence entering, message passing, and query answering using the \ac{fojt}.
Further, for proceeding to the next time step, \ac{ldjt} calculates an $\alpha_t$ message over the interface \acp{prv} using the \emph{out-cluster} to preserve the information about the current state.
Afterwards, \ac{ldjt} increases $t$ by one, instantiates $J_t$, and adds $\alpha_{t-1}$ to the \emph{in-cluster} of $J_t$.
During message passing, $\alpha_{t-1}$ is distributed through $J_t$.
Thereby, \ac{ldjt} performs an \emph{inter} \ac{fojt} forward pass to proceed in time.
Additionally, due to the \emph{inbound} and \emph{outbound} message passing, \ac{ldjt} also performs an \emph{intra} backward pass for the current \ac{fojt}. 

\Cref{fig:fojt1} depicts the passing on of the current state from time step three to four.
To capture the state at $t=3$, \ac{ldjt} sums out the non-interface \acp{prv} $Server_3$ and $Admin_2(Y)$ from $\mathbf{C}_3^3$'s local model and the received messages and saves the result in message $\alpha_3$.
After increasing $t$ by one, \ac{ldjt} adds $\alpha_3$ to the \emph{in-cluster} of $J_4$, $\mathbf{C}_4^2$.
$\alpha_3$ is then distributed by message passing and accounted for during calculating $\alpha_4$.

\section{Smoothing Extension for LDJT}\label{sec:smooth}
We introduce an \emph{inter} \ac{fojt} backward pass and extend \ac{ldjt} with it to also answer \emph{smoothing} queries efficiently.

\subsection{Inter FO Jtree Backward Pass}
Using the forward pass, each instantiated \ac{fojt} contains evidence from the initial time step up to the current time step.
The \emph{inter} \acp{fojt} backward pass propagates information to previous time steps, allowing \ac{ldjt} to answer marginal distribution queries $P(A^i_\pi|\mathbf{E}_{0:t})$ with $\pi < t$. 

The backward pass, similar to the forward pass, uses the interface connection of the \acp{fojt}, calculating a message over the interface \acp{prv} for an \emph{inter} \ac{fojt} message pass.
To perform a backward pass, \ac{ldjt} uses the \emph{in-cluster} of the current \ac{fojt} $J_t$ to calculate a $\beta_t$ message over the interface \acp{prv}.
\ac{ldjt} first has to remove the $\alpha_{t-1}$ message from the  \emph{in-cluster} of $J_t$, since $J_t$ received the $\alpha_{t-1}$ message from the destination of the $\beta_t$ message.
After \ac{ldjt} calculates $\beta_t$ by summing out all non-interface \acp{prv}, it decreases $t$ by one. 
Finally, \ac{ldjt} instantiates the \ac{fojt} for the new time step and adds the $\beta_{t+1}$ message to the \emph{out-cluster} of $J_t$.

\Cref{fig:fojt1} also depicts how \ac{ldjt} performs a backward pass.
\ac{ldjt} uses the \emph{in-cluster} of $J_4$ to calculate $\beta_4$ by summing out all non-interface \acp{prv} of $\mathbf{C}_4^2$'s local model without $\alpha_3$.
After decreasing $t$ by one, \ac{ldjt} adds $\beta_4$ to the \emph{out-cluster} of $J_3$.
$\beta_4$ is then distributed and accounted for in $\beta_3$.

The forward and backward pass instantiate \acp{fojt} from the corresponding structure given a time step.
However, since \ac{ldjt} already instantiates \acp{fojt} during a forward pass, it has different options to instantiate \acp{fojt} during a backward pass.
The first option is to keep all instantiated \acp{fojt} from the forward pass and the second option is to reinstantiate \acp{fojt} using evidence and $\alpha$ messages.
The second option, to reinstantiate previous time steps is only possible by leveraging how \ac{ldjt}'s forward pass is defined.

\subsubsection{Preserving FO Jtree Instantiations}
To keep all instantiated \acp{fojt}, including computed messages, is quite time-efficient since the option reuses already performed computation.
Thereby, during an \emph{intra} \ac{fojt} message pass, \ac{ldjt} only needs to account for the $\beta$ message.
By selecting the \emph{out-cluster} as the root node for message passing, this leads to $n-1$ instead of $2 \times (n-1)$ messages, where $n$ is the number of parclusters.
The required \ac{fojt} is already instantiated and does not need to be instantiated. 
The main drawback is the memory consumption.
Each \ac{fojt} contains all computed messages, evidence, and structure.

\subsubsection{FO Jtree Reinstantiation}
Leveraging \ac{ldjt}'s forward pass, another approach is to reinstantiate \acp{fojt} on demand during a backward pass using evidence and $\alpha$ messages.
\ac{ldjt} repeats the steps to instantiate the \ac{fojt} for which it only needs to save the $\alpha$ message and the evidence.
Thus, \ac{ldjt} can reinstantiate \acp{fojt} on-demand.

The main drawback are the repeated computations.
After \ac{ldjt} instantiates the \ac{fojt}, it enters evidence, $\alpha$, and $\beta$ messages to perform a complete message pass.
Thereby, \ac{ldjt} repeats computations compared to keeping the instantiations and calculating messages can be costly, as in the worst case the problem is exponential to the number of \acp{prv} to be eliminated \cite{taghipour2013first}.

In case \ac{ldjt} only reinstantiates an \ac{fojt} to calculate a $\beta_t$ message, meaning there are no \emph{smoothing} queries for that time step, \ac{ldjt} can calculate the $\beta_t$ message with only $n-1$ messages.
By selecting the \emph{in-cluster} as root, \ac{ldjt} has already after $n-1$ messages (\emph{inbound} pass) all required information in the \emph{in-cluster} to calculate a $\beta_t$ message. 

\subsection{Extended LDJT}
\cref{alg:LDJT} shows the general steps of \ac{ldjt} including the backward pass, as an extension to the original \ac{ldjt} \cite{gehrke2018ldjt}.
\ac{ldjt} uses the function \emph{DFO-JTREE} to construct the structures of the \acp{fojt} $J_0$ and $J_t$ and the set of interface \acp{prv} as described in \cref{ldjt:const}.
Afterwards, \ac{ldjt} enters a while loop, which it leaves after reaching the last time step, and performs the routine of entering evidence, message passing and query answering for the current time step.
Lastly, \ac{ldjt} performs one forward pass, as described in \cref{sec:forward}, to proceed in time.

\begin{algorithm}[t]
    \caption{LDJT Alg. for \ac{pdm} $(G_0,G_\rightarrow)$, Queries $\{\mathbf{Q}\}_{t=0}^T$, Evidence $\{\mathbf{E}\}_{t=0}^T$}
    \label{alg:LDJT}
    \begin{algorithmic}
        \Procedure{LDJT}{$G_0,G_\rightarrow, \{\mathbf{Q}\}_{t=0}^T, \{\mathbf{E}\}_{t=0}^T$}
            \State $t := 0$
            \State $(J_0,J_t, \mathbf{I}_t) : =$  DFO-JTREE$(G_0,G_\rightarrow)$ 
            \While{$t \neq T+1$}
            \State $J_{t} :=$ LJT.EnterEvidence($J_{t}, \mathbf{E}_t$)
            \State $J_{t} :=$ LJT.PassMessages($J_{t}$)
            \For{$q_{\pi} \in \mathbf{Q}_t$}
            \State AnswerQuery($J_0, J_t, q_{\pi}, \mathbf{I}_t, \alpha, t$)
            \EndFor
            \State $(J_t,t, \alpha[t-1]) :=$ ForwardPass$(J_0, J_t, t, \mathbf{I}_t)$
        \EndWhile
        \EndProcedure
    \end{algorithmic}  
    \hrulefill
    \begin{algorithmic}
        \Procedure{AnswerQuery}{$J_0, J_t, q_{\pi}, \mathbf{I}_t, \alpha, t$} 
        \While{$t \neq \pi$}
            \If{$t>\pi$}
                \State $(J_t, t) :=$ BackwardPass$(J_0, J_t,  \mathbf{I}_t, \alpha[t-1], t)$ 
            \Else
                \State $(J_t, t, \_) :=$ ForwardPass$(J_0, J_t, \mathbf{I}_t, t)$                           
            \EndIf
        \State LJT.PassMessages$(J_t)$
        \EndWhile
        \State \textbf{print} LJT.AnswerQuery$(J_t, q_{\pi})$
        \EndProcedure
    \end{algorithmic}
    \hrulefill
    \begin{algorithmic}
        \Function{ForwardPass}{$J_0, J_t, \mathbf{I}_t, t$}
            \State $\alpha_t := \sum_{J_t(\text{out-cluster}) \setminus \mathbf{I}_t} J_t(\text{out-cluster})$ 
            \State $t := t+1$
            \State $J_t(\text{in-cluster}) := \alpha_{t-1} \cup J_t(\text{in-cluster})$
            \State\Return $(J_t, t, \alpha_{t-1})$      
        \EndFunction
    \end{algorithmic}
    \hrulefill
    \begin{algorithmic}
        \Function{BackwardPass}{$J_0, J_t, \mathbf{I}_t, \alpha_{t-1}, t$}
            \State $\beta_t := \sum_{J_t(\text{in-cluster}) \setminus \mathbf{I}_t} (J_t(\text{in-cluster}) \setminus \alpha_{t-1})$ 
            \State $t := t-1$
            \State $J_t(\text{out-cluster}) := \beta_{t+1} \cup J_t(\text{out-cluster})$
            \State\Return $(J_t, t)$      
        \EndFunction
    \end{algorithmic}
\end{algorithm}

The main extension of \cref{alg:LDJT} is in the query answering function.
First, \ac{ldjt} identifies the type of query, namely \emph{filtering}, \emph{prediction}, and \emph{smoothing}.
To perform \emph{filtering}, \ac{ldjt} passes the query and the current \ac{fojt} to \ac{ljt} to answer the query.
\ac{ldjt} applies the forward pass until the time step of the query is reached to answer the query for \emph{prediction} queries.
To answer \emph{smoothing} queries, \ac{ldjt} applies the backward pass until the time step of the query is reached and answers the query.
Further, \ac{ldjt} uses \ac{ljt} for message passing to account for $\alpha$ respectively $\beta$ messages.

Let us now illustrate how \ac{ldjt} answers \emph{smoothing} queries.
We assume that the server is compromised at time step $1983$ and we want to know whether $Admin(y_1)$ infected $User(x_1)$ at time step $1973$ and whether $User(x_1)$ is compromised at timestep $1978$.
Hence, \ac{ldjt} answers the marginal distribution queries $P(\mathbf{Q}_{1983}|\mathbf{E}_{0:1983})$, where the new evidence $\mathbf{E}_{1983}$ consists of $\{Server_{1983} = true\}$ and the set of query terms $\mathbf{Q}_{1983}$ consists of at least the query terms $\{User_{1978}(x_1), Infects_{1973}(x_1,y_1)\}$.

\ac{ldjt} enters the evidence $\{Server_{1983} = true\}$ in $J_{1983}$ and passes messages.
To answer the queries, \ac{ldjt} performs a backward pass and first calculates $\beta_{1983}$ by summing out $User_{1983}(X)$ from $\mathbf{C}^2_{1983}$'s local model and received messages without $\alpha_{1982}$.
\ac{ldjt} adds the $\beta_{1983}$ message to $\mathbf{C}^3_{1982}$'s local model and passes messages in $J_{1982}$ using \ac{ljt}.
In such a manner \ac{ldjt} proceeds until it reaches time step $1978$ and thus propagated the information to $J_{1978}$. 

Having $J_{1978}$, \ac{ldjt} can answer the marginal distribution query $P(User_{1978}(x_1)|\mathbf{E}_{0:1983})$.
To answer the query, \ac{ljt} can sum out $Attack1_{1978}$ and $User_{1978}(X)$ where $X \neq x_1$ from $\mathbf{C}^4_{1978}$'s local model and the received message from $\mathbf{C}^3_{1978}$. 
To answer the other marginal distribution query $P(Infects_{1973}(x_1,y_1)|\mathbf{E}_{0:1983})$, \ac{ldjt} performs additional backward passes until it reaches time step $1973$ and then uses again \ac{ljt} to answer the query term $Infects_{1973}(x_1,y_1)$ given $J_{1973}$.
Even though, \cref{alg:LDJT} states that \ac{ldjt} has to start the \emph{smoothing} query for time step $1973$ from $1983$, \ac{ldjt} can reuse the computations it performed for the first \emph{smoothing} query.

\begin{theorem}
     \ac{ldjt} is correct regarding \emph{smoothing}.
\end{theorem}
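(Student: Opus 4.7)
The plan is to reduce correctness of the smoothing extension to three ingredients already at hand: (a) correctness of \ac{ljt} as an exact inference algorithm on any \ac{fojt} satisfying the running-intersection property, (b) correctness of \ac{ldjt}'s forward pass established in~\cite{gehrke2018ldjt}, and (c) the m-separation guaranteed by the interface \acp{prv} $\mathbf{I}_t$, which justifies a standard forward--backward factorisation of the joint distribution over the unrolled \ac{pdm}. Given (c), the unrolled distribution admits a recursion in which past-evidence mass can be summarised by a potential over $\mathbf{I}_{t-1}$ and future-evidence mass by a potential over $\mathbf{I}_t$; the task is to show that \ac{ldjt}'s $\alpha_t$ and $\beta_t$ messages realise exactly these potentials.

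First I would recall that, by (b), after the forward sweep up to time $t$, the \emph{out-cluster} of $J_\tau$ for every $\tau \le t$ contains a potential that, marginalised onto $\mathbf{I}_\tau$, equals $P(\mathbf{I}_\tau,\mathbf{E}_{0:\tau})$ up to normalisation; in particular $\alpha_{\tau-1}$ equals $P(\mathbf{I}_{\tau-1},\mathbf{E}_{0:\tau-1})$. Next, by induction on $t-\pi$, I would show that the backward message $\beta_t$ produced by BackwardPass equals $P(\mathbf{E}_{t:T}\mid \mathbf{I}_{t-1})$ up to proportionality. The base case is immediate: by \ac{ljt} correctness on the static \ac{fojt} $J_t$, summing the \emph{in-cluster}'s local model and incoming intra-\ac{fojt} messages over the non-interface \acp{prv} gives the conditional distribution of all current-and-later evidence given $\mathbf{I}_{t-1}$, provided the $\alpha_{t-1}$ contribution is first removed --- otherwise past evidence is counted once inside $\alpha_{t-1}$ and again when $\beta_t$ is reinjected at $J_{t-1}$. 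The induction step follows the same pattern: once $\beta_{t+1}$ is added to the \emph{out-cluster} of $J_t$, an \ac{ljt} message pass propagates future evidence to the \emph{in-cluster}, and summing out non-interface \acp{prv} (again without $\alpha_{t-1}$) yields $\beta_t$.

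Combining the forward and backward invariants at any parcluster of $J_\pi$ gives the desired conditional: the local potential, together with $\alpha_{\pi-1}$ in the \emph{in-cluster} and $\beta_{\pi+1}$ in the \emph{out-cluster}, factorises $P(\mathbf{A}_\pi,\mathbf{E}_{0:T})$ in accordance with the m-separation at the interfaces. A final \ac{ljt} pass on $J_\pi$ then answers $P(Q_\pi\mid\mathbf{E}_{0:T})$ correctly for every $Q_\pi$ appearing in some parcluster of $J_\pi$, by (a). Iterating the argument across several backward hops covers arbitrary $\pi<t$, and the reinstantiation option is handled identically because the reconstructed $J_\pi$ is structurally and semantically the same \ac{fojt} seen during the forward pass.

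The hard part, and the step the proof must isolate cleanly, is the subtraction of $\alpha_{t-1}$ inside BackwardPass. One has to argue that the interface \acp{prv} truly d-separate past from future given the current slice --- which is exactly the m-separation property underlying the forward pass --- so that the past-evidence potential already absorbed into the \emph{in-cluster} during the forward phase must be stripped out before marginalising to form $\beta_t$. Stating and invoking this separation symmetrically for the backward direction is what upgrades the forward-pass correctness of~\cite{gehrke2018ldjt} into correctness for \emph{smoothing}, and closes the argument.
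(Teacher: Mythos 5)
Your proposal is correct and takes essentially the same route as the paper's own proof: intra-\ac{fojt} correctness via \ac{ljt}, m-separation through the interface \acp{prv}, removal of $\alpha_{t-1}$ before computing $\beta_t$ because that message originated from the designated receiver (avoiding double counting), and the observation that once the backward pass reaches $J_\pi$ it contains all evidence, so answering reduces to a filtering-style query there. You merely make explicit the forward--backward message semantics ($\alpha_\tau \propto P(\mathbf{I}_\tau,\mathbf{E}_{0:\tau})$ and $\beta_t \propto P(\mathbf{E}_{t:\cdot}\mid\mathbf{I}_{t-1})$) and the induction that the paper leaves informal.
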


\begin{proof}
	Each \ac{fojt} contains evidence up to the time step the \ac{fojt} is instantiated for.
	To perform \emph{smoothing}, \ac{ldjt} distributes information, including evidence, from the current \ac{fojt} $J_t$ backwards.
    Therefore, \ac{ldjt} performs an \emph{inter} \acp{fojt} backward message pass over the interface separator.
    The $\beta_t$  message is correct, since calculating the $\beta_t$ message, the \emph{in-cluster} received all messages from its neighbours and removes the $\alpha_{t-1}$ message, which originated from the designated receiver. 
    The $\beta_t$ message, which \ac{ldjt} adds to the \emph{out-cluster} of $J_{t-1}$, is then accounted for during the message pass inside $J_{t-1}$ and thus, during the calculation of $\beta_{t-1}$.
    Following this approach, every \ac{fojt} included in the backward pass contains all evidence.
    Thus, it suffices to apply the backward pass until \ac{ldjt} reaches the desired time step and does not need to apply the backward pass until $t=0$.
    Hence, \ac{ldjt} propagates back information until it reaches the desired time step and performs \emph{filtering} on the corresponding \ac{fojt} to answer the query.
\end{proof}

\subsection{Discussion}

As \ac{ldjt} has two options to instantiate previous time steps, we discuss how to combine the options efficiently.
Further, \ac{ldjt} can also leverage calculations from the current time step for multiple \emph{smoothing} and \emph{prediction} queries. 

\subsubsection{Combining Instantiation Options}
To provide the queries to \ac{ldjt}, a likely scenario is a predefined set of queries for each time step with an option for additional queries on-demand.
In such a scenario, reoccurring \emph{smoothing} queries are in the predefined set, also called fixed-lag \emph{smoothing}, and therefore, the number of time steps for which \ac{ldjt} has to perform a backward pass is known.
With a known fixed-lag a combination of our two options is advantageous.
Assuming the largest \emph{smoothing} lag is $10$, \ac{ldjt} can always keep the last $10$ \acp{fojt} in memory and reinstantiate \acp{fojt} on-demand.
Further, in case an on-demand \emph{smoothing} query has a lag of $20$, \ac{ldjt} can reinstantiate the \acp{fojt} starting with $J_{t-11}$.
Thereby, \ac{ldjt} can keep a certain number of \acp{fojt} instantiated, for fast query answering, and in case a \emph{smoothing} query is even further in the past, reconstruct the \acp{fojt} on demand using evidence and $\alpha$ messages. 
Hence, combining the approaches is a good compromise between time and space efficiency.

To process a data stream, one possibility is to reason over a sliding window that proceeds with time \cite{oezMoeNeu15}.
Thereby, each window stores a processable amount of data.
To keep \acp{fojt} instantiated to faster answer hindsight queries is comparable to sliding windows in stream data processing.
\ac{ldjt} keeps only a reasonable amount of \acp{fojt} and the window slides to the next time step, when \ac{ldjt} proceeds in time.

\subsubsection{Reusing Computations for One Time Step}
During query answering for one time step, \ac{ldjt} can also reuse computations. 
For example, let us assume, we have two \emph{smoothing} queries, one with a lag of $2$ and the other with a lag of $4$.
\ac{ldjt} can reuse the calculations it performed during the \emph{smoothing} query with a lag of $2$, namely it can start the backward pass for the query with lag $4$ at $J_{t-2}$ and does not need to recompute the already performed two backward passes.
To reuse the computations, there are two options.

The first option is that the \emph{smoothing} queries are sorted based on the time difference to the current time step.
Here, \ac{ldjt} can keep the \ac{fojt} from the last \emph{smoothing} query and perform additional backward passes, but does not repeat computations that lead to the \ac{fojt} of the last \emph{smoothing} query.
The second option is similar to the \ac{fojt} reinstantition, namely to keep the calculated $\beta$ messages for the current time step and reinstantiate the \ac{fojt} closest to the currently queried time step.
Analogously, \ac{ldjt} can reuse computations for answering \emph{prediction} queries. 

Under the presence of \emph{prediction} queries, \ac{ldjt} does not have to recompute $\alpha_t$ after it answered all queries, since \ac{ldjt} already computed $\alpha_t$ during a \emph{prediction} query with the very same evidence in the \ac{fojt} present.
Unfortunately, given new evidence for a new time step all other $\alpha$ and $\beta$ messages that \ac{ldjt} calculated during \emph{prediction} and \emph{smoothing} queries from the previous time step are invalid. 

\section{Evaluation}\label{sec:eval}

We compare \ac{ldjt} against \ac{ljt} provided with the unrolled model for multiple maximum time steps.
To be more precise, we compare, for each maximum time step, the runtime of \ac{ldjt} with instant complete \ac{fojt} reinstantiation, the worst case, against the runtime of \ac{ljt} directly provided all information for all time steps, resulting in one message pass, which is the best case for \ac{ljt}.
For the evaluation, we use the example model $G^{ex}$ with the set of evidence being empty. 

We start by defining a set of queries that is executed for each time step and then evaluate the runtimes of \ac{ldjt} and \ac{ljt}.
Our predefined set of queries for each time step is: $\{Server_t$, $User_t(x_1)$, $Admin_t(y_1)\}$ with lag $0$, $2$, $5$, and $10$.
For each time step these $12$ queries are executed.

\begin{figure}[t]
    \includegraphics[width=0.45\textwidth]{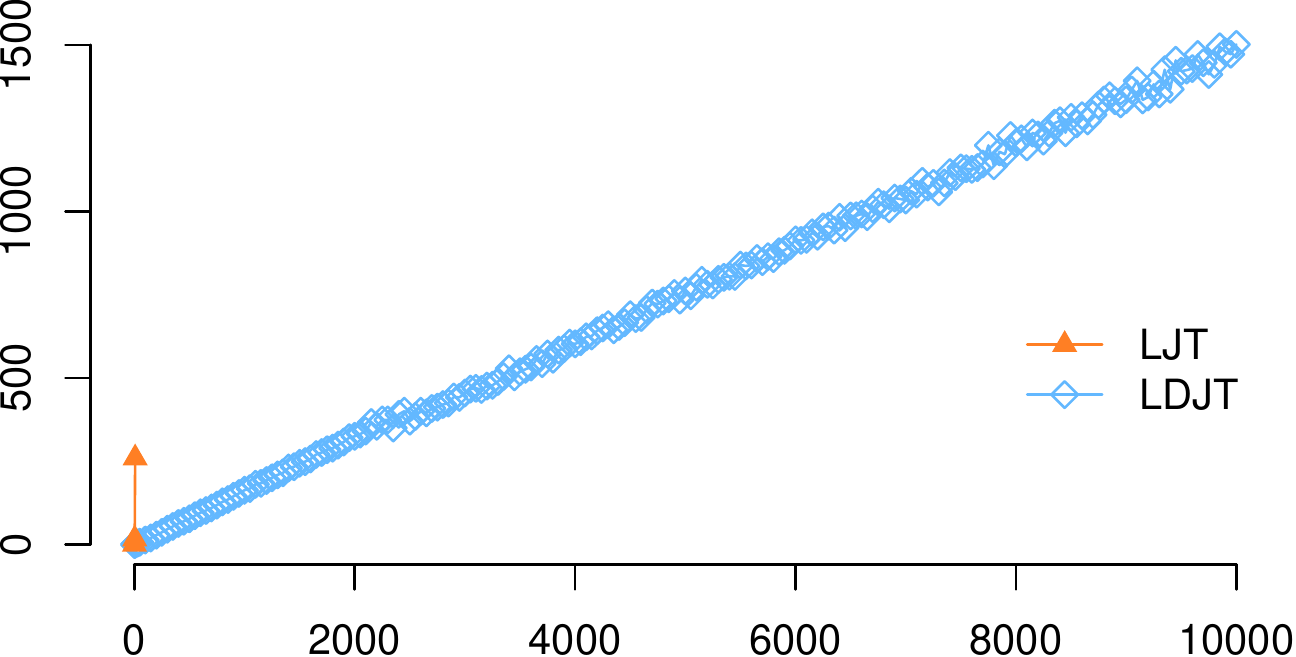}
  \caption{Runtimes [seconds], x-axis: maximum time steps}
  \label{fig:times_overall}	
\end{figure}

Now, we evaluate how long  \ac{ldjt} and \ac{ljt} take to answer the queries for up to $10000$ time steps.
\Cref{fig:times_overall} shows the runtime in seconds for each maximum time step.
We can see that the  runtime of \ac{ldjt} (diamond) to answer the questions is linear to the maximum number of time steps. 
Thus, \ac{ldjt} more or less needs a constant time to answer queries once it instantiated the corresponding \ac{fojt} and also the time to perform a forward or backward pass is more or less constant w.r.t.\ time, no matter how far \ac{ldjt} already proceeded in time.
For \ac{ldjt} the runtimes for each operation are independent from the current time step, since the structure of the model stays the same over time.
To be more precise, for our example \ac{ldjt} takes about ${\sim}5$ms to initially construct the two \acp{fojt}.
For a forward or backward pass, \ac{ldjt} roughly needs ${\sim}6.5$ms.
Both passes include a complete message pass, which roughly takes ${\sim}6$ms. 
Thus, most of the time for a forward or backward pass is spent on message passing.
To answer a query, \ac{ldjt} needs on average ${\sim}5$ms.
To obtain the runtimes, we used a virtual machine having a 4 core Intel Xeon E5 with 2.4 GHz , 16 GB of RAM, and running Ubuntu 14.04.5 LTS (64 Bit).

Providing the unrolled model to \ac{ljt}, it produces results for the first $8$ time steps with a reduced set of queries.
Here, we can see that the runtime of \ac{ljt} appears to be exponential to the maximum number of time steps, which is expected.
The \ac{fojt} construction of \ac{ljt} is not optimised for the temporal case, such as creating an \ac{fojt} similar to an unrolled version of \ac{ldjt}'s \ac{fojt}.
Therefore, the number of \acp{prv} in a parcluster increases with additional time steps in \ac{ljt}.
With additional time steps, the unrolled model becomes larger and while constructing an \ac{fojt}, \acp{prv} that depend on each other are more likely to be clustered in a parcluster.
Thus, with more \acp{prv}, the number of \acp{prv} in a parcluster is expected to grow.
In our example, maximum number of \acp{prv} in a parcluster for $4$ time steps is $14$ and for $8$ time steps is $27$.
For a ground jtree, the complexity of variable elimination is exponential to the largest cluster in the jtree \cite{darwiche2009modeling}.
Thus, we can explain why \ac{ljt} is more or less exponential to the maximum number of time steps.

\begin{figure}[t]
    \includegraphics[width=0.45\textwidth]{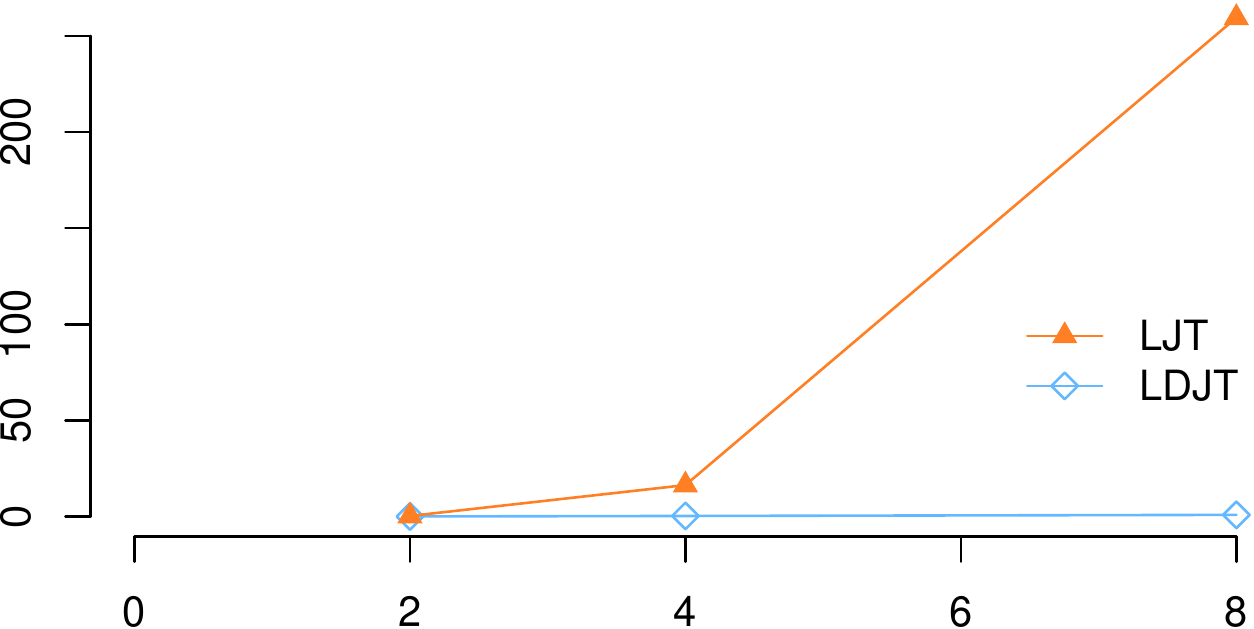}
  \caption{Runtimes [seconds], x-axis: maximum time steps}
  \label{fig:times_ljt}	
\end{figure}


Overall, \cref{fig:times_overall} shows
\begin{enumerate*}
     \item how crucial proper handling of temporal aspects is and 
     \item that \ac{ldjt} performs really well.
\end{enumerate*}
Further, we can see that \ac{ldjt} can handle combinations of different query types, such as \emph{smoothing} and \emph{filtering}.
With evidence as input for all time steps the runtime is only marginally slower.
Executing the example with evidence for all time steps produces an overhead of roughly ${\sim}8$ms on average for each time step.
Therefore, for each time step, reinstantiating $10$ \acp{fojt} with evidence and performing message passing only produces an overhead of ${\sim}8$ms compared to having an empty set of evidence. 
The linear behaviour with increasing maximum time steps is the expected and desired behaviour for an algorithm handling temporal aspects.
Furthermore, from a runtime complexity of \ac{ldjt} there should be no difference in either performing a \emph{smoothing} query with lag $10$ or a \emph{prediction} query with $10$ time steps into the future.
Thus, the previous results that \ac{ldjt} outperforms the ground case still hold.

For the evaluation, the more time efficient version would be to always keep the last $10$ \acp{fojt}.
Each message pass takes ${\sim}6$ms on average and during each backward pass \ac{ldjt} performs a message pass.
By keeping the instantiated \acp{fojt}, \ac{ldjt} can halve the number of messages during a message passing.
Assuming that the runtime of the message pass is linear to the number of calculated messages, resulting in reducing the runtime of each message pass to ${\sim}3$ms by keeping the \acp{fojt}.
Further, for each time step, \ac{ldjt} performs $10$ backward passes each with a message pass.
Thus, \ac{ldjt} can reduce the runtime by ${\sim}30$ms per time step and an overall reduce the runtime by ${\sim}300$s, which is about $20\%$ of the overall runtime for $10000$ timesteps.

\section{Conclusion}\label{sec:conc}

We present a complete \emph{inter} \ac{fojt} backward pass for \ac{ldjt}, allowing to perform \emph{smoothing} efficiently for relational temporal models.
\ac{ldjt} answers multiple queries efficiently by reusing a compact \ac{fojt} structure for multiple queries.
Due to temporal m-separation, which is ensured by the \emph{in-} and \emph{out-clusters}, \ac{ldjt} uses the same compact structure for all time steps $t > 0$.
Thus, \ac{ldjt} also reuses the structure during a backward pass.
Further, it reuses computations from a forward pass during a backward pass.
\ac{ldjt}'s relational forward backward pass also makes \emph{smoothing} queries to the very beginning feasible.
First results show that \ac{ldjt} significantly outperforms \ac{ljt}.
 
We currently work on extending \ac{ldjt} to also calculate the most probable explanation as well as a solution to the maximum expected utility problem. 
The presented backward pass could also be helpful to deal with incrementally changing models.
Additionally, it would be possible to reinstantiate an \ac{fojt} for a backward pass solely given the $\alpha$ messages.
Other interesting future work includes a tailored automatic learning for \acp{pdm}, parallelisation of \ac{ljt}. 

\subsubsection*{Acknowledgement}
This research originated from the Big Data project being part of Joint Lab 1, funded by Cisco Systems, at the centre COPICOH, University of Lübeck
    
\bibliographystyle{aaai}
\bibliography{../bib/tex/bib}

\end{document}